\documentclass[11pt]{article}
\PassOptionsToPackage{hyphens}{url}
\usepackage[final]{acl}

\usepackage{times}
\usepackage{latexsym}
\usepackage[T1]{fontenc}
\usepackage[utf8]{inputenc}
\usepackage{microtype}
\usepackage{graphicx}
\usepackage{amsmath,amssymb,amsthm}
\usepackage{booktabs}
\usepackage{array}
\usepackage{multirow}
\usepackage{xcolor}
\usepackage{url}
\usepackage{enumitem}
\setlist{leftmargin=*}
\newcommand{\diag}{\textsc{RFR}}
\newcommand{\method}{\textsc{RFCR}}
\newcommand{\csicl}{\textsc{CS-ICL}}
\newcommand{\bbh}{\textsc{BBH}}
\newcommand{\rulename}{rule}
\newcommand{\atom}{atom}
\newcommand{\atoms}{atoms}
\newcommand\blfootnote[1]{%
\begingroup\renewcommand\thefootnote{}\footnote{#1}\addtocounter{footnote}{-1}\endgroup}
\newtheorem{proposition}{Proposition}
\newtheorem{lemma}{Lemma}

\newif\ifhidecomments
\ifhidecomments
    \newcommand{\chenhao}[1]{}
    \newcommand{\haokun}[1]{}
    \newcommand{\andrew}[1]{}
\else
    \newcommand{\chenhao}[1]{\textcolor{blue}{[\textsc{Chenhao}: #1]}}
    \newcommand{\haokun}[1]{\textcolor{green!30!brown}{[\textsc{Haokun}: #1]}}
    \newcommand{\andrew}[1]{\textcolor{teal}{[\textsc{Andrew}: #1]}}
\fi

\title{Robust Failure, Conservative Repair:\\
Textual Knowledge Distillation from Cross-Model Failures}
\author{Andrew Ren \and Haokun Liu \and Chenhao Tan \\ University of Chicago \\
  \texttt{\{renqy, haokunliu, chenhao\}@uchicago.edu}}

\begin{document}
\maketitle

\begin{abstract}
Failure-based textual knowledge distillation aims to discover gaps in a model’s knowledge by examining its task errors. The distilled knowledge can be useful for the reasoning of both this model (``source model'') and other models. However, this transfer of knowledge may not be stable. We define a rule atom to be a standalone rule injected into a model’s textual input at inference time. A rule atom can encode transferable task knowledge or model-specific reasoning patches that can confuse other models. Also, the injected rule atoms can be misapplied to unrelated cases, causing the model to incorrectly flip its answer based on irrelevant information. Building on a pipeline that distills training examples into task-specific cheat sheets that aid model reasoning, we examine when failure-derived rules can improve these cheat sheets. Our early experiment shows rule distillation from a single model's failures underperforms the baseline cheat sheet on non-source model families.
This motivates \textit{Robust Failure, Conservative Repair} (RFCR), a textual distillation procedure that derives rules from failures shared across models, sharpens their application boundaries using boundary cases, and abstains when no useful rule is found. On a 400-item BIG-Bench Hard \citep{suzgun2022bbh} task set, RFCR improves the baseline 
cheat sheets from 68.50\% to 71.25\% (+2.75 pp; 95\% CI [+1.25,+4.50]) without performance degradation on previously correct cases. Ablations and cross-model diagnostics support that accuracy gains come from both new knowledge injection and strict rule-application control.\blfootnote{Code and artifacts are available at: \url{https://github.com/ChicagoHAI/RFCR}.}
\end{abstract}

\begin{figure*}[t]
\centering
\small
\includegraphics[width=0.95\textwidth]
{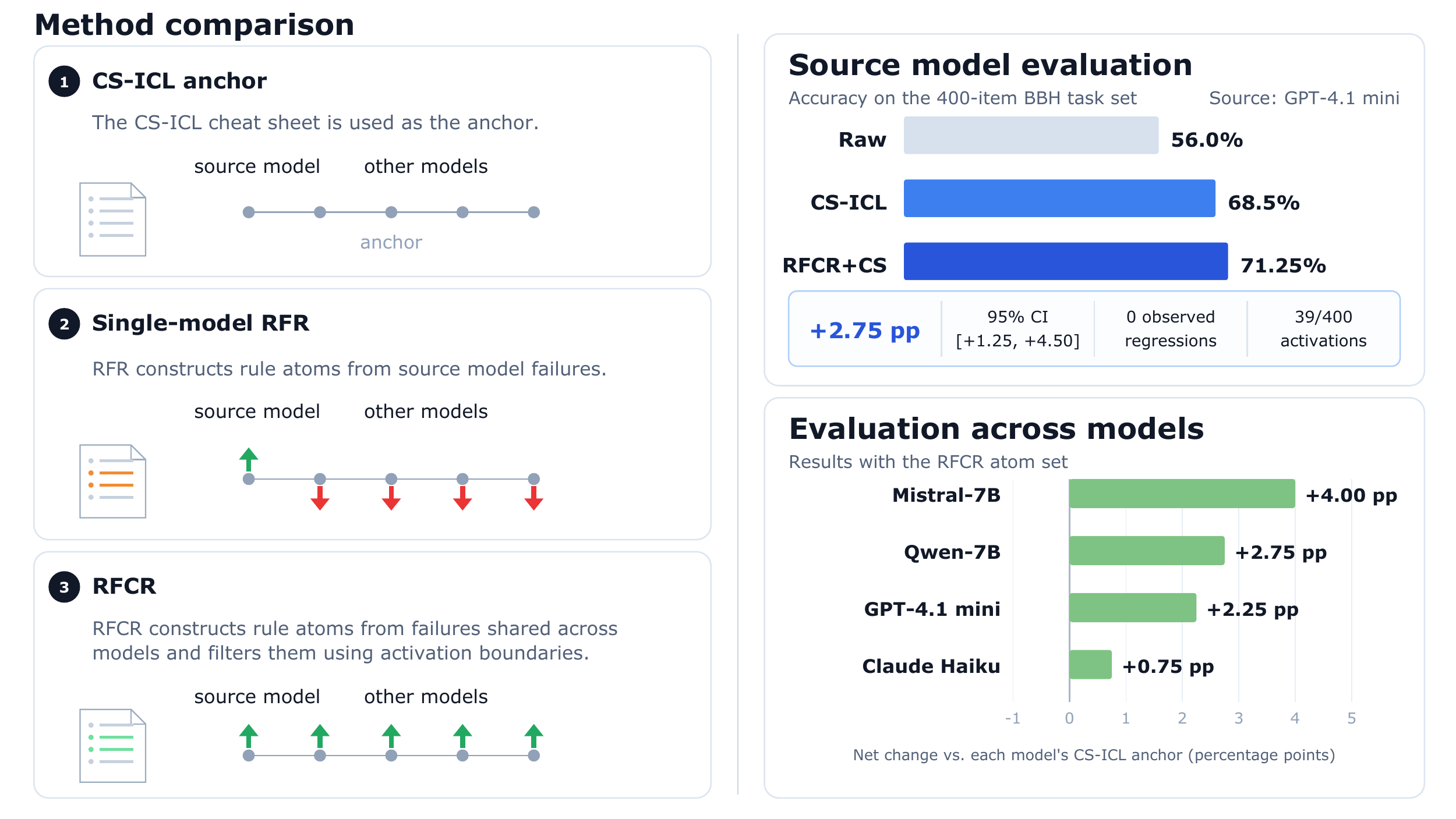}
\caption{Overview of \method{} and its results. \textbf{Left:} \csicl{} provides the anchor cheat sheet. Source-only failure refinement can produce model-specific patches that improve the source model but hurt other models. \method{} constructs rule atoms from failures shared across models and constrains their activation using boundary cases. \textbf{Right:} On the 400-item \bbh{} task set, \method{} improves \csicl{} by 11 items (+2.75 pp) without performance degradations on previously correct cases. These rule atoms also produce gains for the four models shown. Green and red arrows indicate improvements and performance degradations.}
\label{fig:overview1}
\end{figure*}

\section{Introduction}
Large language models (LLMs) can turn concrete examples and task heuristics into textual artifacts. One example is a cheat sheet: a compact set of task rules distilled from sample task cases and provided at inference time. Cheat sheet in-context learning (\csicl{}) demonstrates this capability by compressing many-shot demonstrations into a reusable task-specific cheat sheet \citep{honda2025csicl}. 
Similarly, the SAIR Mathematics Distillation Challenge asks whether strong mathematical reasoning can be compressed into a compact, human-readable cheat sheet that helps a weaker model on equational-theory problems \citep{sair2026mathdistillation}. In both cases, we construct concise and readable cheat sheets that can efficiently transfer strong-model reasoning to cheaper models. A natural question is whether such useful artifacts can be distilled from model failures.

This question is a form of textual knowledge distillation. We focus on cases that the source model answers incorrectly when receiving a baseline cheat sheet. These failures may expose missing knowledge. For example, if a source model repeatedly errs on a geometric relation, a pronoun-resolution pattern, or a logical form, it likely lacks knowledge needed to answer such questions correctly. The difficulty is that source-model failures do not automatically produce transferable task knowledge. A failure-derived rule may encode a model-specific patch instead of transferable knowledge. Such a patch may correct one model’s residual errors but hurt a target model that does not share those errors. Our Robust Failure Repair (\diag) diagnostic exhibits this behavior: source-only failure refinement underperforms the static \csicl{} baseline across all tested non-source model families.

We therefore introduce \textit{Robust Failure, Conservative Repair} (RFCR), a conservative textual distillation pipeline that derives rule atoms from failures shared across models. We define a rule atom to be a standalone rule injected into the model’s textual input at inference time. RFCR reframes failure-based refinement as \emph{selective textual distillation}. A candidate \rulename{} atom should not be accepted merely because it fixes a source-model failure. It should be accepted only when its application boundary limits its use on unrelated or already-correct cases. To measure this behavior, we introduce \emph{activation precision}. A rule atom is activated on a task case when the model uses it during reasoning. Activation precision is the proportion of cases on which the atom activates that belong to its intended target set. High activation precision means that a rule atom’s activations align closely with the target cases it is designed to solve. Low activation precision, on the other hand, indicates that the rule activates on irrelevant cases where its encoded knowledge is not needed. The resulting procedure is conservative: RFCR accepts a \rulename{} atom only when the evidence supports cross-model accuracy gains without degrading performance on other cases. 

In summary, we make the following contributions.
First, we show that rule atoms distilled from one model’s failures can correct the source model's errors but hurt other models that do not share the same errors. Second, we introduce activation precision to measure how frequently a \rulename{} atom is used on its intended cases rather than unrelated ones. Third, we develop \textit{Robust Failure, Conservative Repair} (\method{}). This conservative pipeline derives rule atoms from shared failures, constrains their activation with boundary cases, and leaves the cheat sheet unchanged if none passes validation. On a 400-item BIG-Bench Hard (\bbh) task set \citep{suzgun2022bbh}, \method{} improves \csicl{} by 11 items (+2.75 pp) without performance degradations on previously correct cases. More broadly, \method{} provides insight into a general conservative mechanism for failure-based knowledge distillation: find shared errors for rule construction, refine rule activation boundaries, and accept only rules that improve target accuracy without degrading other cases.

\section{Failure-Derived Distillation Can Produce Model-Specific Patches}
\label{sec:problem}
We first develop a Robust Failure Repair (\diag) diagnostic pipeline to explore how failure-based textual distillation can fail. Phase 0 builds a global cheat sheet from examples. Phase 1 patches the cheat sheet using source-model errors. Phase 2 adds local \textsc{Activate-If/Why} rules targeting residual source failures. Details of the \diag{} implementation can be found in Appendix~\ref{app:rfr}. As distillation moves from global examples toward residual source errors, the modification becomes more model-specific because it is refined on increasingly narrow error sets that other models may not share.

We then examine the impact of source-model-specific error fixing on non-source models. On five non-ceiling \bbh{} tasks, the full \diag{} pipeline falls below the static \csicl{} baseline for every non-source family (Table~\ref{tab:diag}). This pattern shows that source-only refinement is not just hard to transfer, it can actively interfere with target models, which leads to performance degradation.

\begin{table}[t]
\centering
\small
\begin{tabular}{lrrr}
\toprule
Target model & \csicl{} & \diag{} & $\Delta$ \\
\midrule
GPT-4.1 & 87.1 & 83.7 & -3.4 \\
Gemini-2.0 & 81.4 & 80.4 & -1.0 \\
Llama-3.3 & 78.3 & 75.7 & -2.6 \\
Claude-3.7 & 87.7 & 84.0 & -3.7 \\
\bottomrule
\end{tabular}
\caption{Early \bbh{} diagnostic: mean accuracy over five non-ceiling tasks. The full source-refined \diag{} prompt underperforms \csicl{} on every non-source family.}
\label{tab:diag}
\end{table}

\section{Transfer Theory for Failure-Derived Textual Knowledge Distillation}
\label{sec:theory}
We define the anchor cheat sheet to be the baseline cheat sheet we aimed to improve on. An anchor-correct task case is a task case that is evaluated correctly under the anchor cheat sheet. Let $C_0$ be an anchor cheat sheet. For a model $M$, let $h^{C_0}_M(x)$ be its answer to input $x$ under $C_0$. The anchor matters because the same \atom{} may help under a raw prompt but harm on top of a stronger cheat sheet. We therefore define anchor-conditioned regions
\begin{align}
F_M(C_0) &= \{x : h^{C_0}_M(x) \ne y\}, \\
K_M(C_0) &= \{x : h^{C_0}_M(x) = y\}.
\end{align}
A \rulename{} atom is a triple $m=(c,a_m,b_m)$: rule text $c$, a positive activation predicate $a_m(x)$, and an activation boundary predicate $b_m(x)$. Its effective activation is
\begin{equation}
\tilde a_m(x)=a_m(x)(1-b_m(x)).
\end{equation}
The rule-atom format can be found in Section~\ref{sec:construction}, and at inference time both predicates reduce to keyword matching against the input question (Section~\ref{sec:routing}).

Within the effective active region, three quantities determine the sign of the atom's effect:
\begin{align}
\pi_M(m;C_0) &= \Pr[x \in F_M(C_0) \mid \tilde a_m(x)=1], \\
f_M(m;C_0) &= \Pr[h^{C_0+m}_M(x)=y \mid \tilde a_m(x)=1,\nonumber\\
&\hspace{5.5em} x \in F_M(C_0)], \\
r_M(m;C_0) &= \Pr[h^{C_0+m}_M(x)\ne y \mid \tilde a_m(x)=1,\nonumber\\
&\hspace{5.5em} x \in K_M(C_0)].
\end{align}
Here $\pi_M(m;C_0)$ represents \emph{activation precision}: among examples where the \atom{} is used, how many are examples where the model needed help under the anchor? The terms $f_M$ and $r_M$ are conditional fix and performance degradation rates.

\begin{proposition}[Activation-precision threshold]
Assume $\Pr[\tilde a_m(x)=1] > 0$ and that $m$ does not change predictions outside its effective active region. The marginal accuracy effect of $m$ on model $M$ under anchor $C_0$ is positive if and only if
\begin{equation}
\pi_M(m;C_0) > \frac{r_M(m;C_0)}{f_M(m;C_0)+r_M(m;C_0)},
\end{equation}
with zero effect when $f_M(m;C_0)=r_M(m;C_0)=0$.
\end{proposition}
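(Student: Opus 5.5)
The plan is to write the marginal accuracy effect $\Delta = \Pr[h^{C_0+m}_M(x)=y] - \Pr[h^{C_0}_M(x)=y]$ as an expectation over $x$, and split it using the partition $\{\tilde a_m=1\}$ versus $\{\tilde a_m=0\}$. Because $m$ leaves predictions unchanged outside its effective active region, the contribution from $\{\tilde a_m=0\}$ vanishes. Writing $p=\Pr[\tilde a_m(x)=1]>0$, this gives $\Delta = p\,\Delta_{\mathrm{act}}$, where $\Delta_{\mathrm{act}}$ is the same accuracy difference conditioned on $\tilde a_m(x)=1$. Hence the sign of $\Delta$ equals the sign of $\Delta_{\mathrm{act}}$.

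Next, split the active region into $F_M(C_0)$ and $K_M(C_0)$, which partition all inputs. On $F_M(C_0)$ the anchor answer is wrong, so the change in correctness is the indicator that $C_0+m$ becomes correct. Its conditional mean is exactly $f_M(m;C_0)$. On $K_M(C_0)$ the anchor answer is correct, so the change is minus the indicator that $C_0+m$ becomes wrong, with conditional mean $-r_M(m;C_0)$. By the law of total probability within the active region, with $\pi=\pi_M(m;C_0)$,
\begin{equation}
\Delta_{\mathrm{act}} = \pi f_M(m;C_0) - (1-\pi)\, r_M(m;C_0).
\end{equation}
There is a degenerate case where $\pi\in\{0,1\}$, so one of the conditioning events has probability zero and $f_M$ or $r_M$ is undefined. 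In that case the corresponding term carries weight zero. I would adopt the convention that the undefined rate can take any value in $[0,1]$ without affecting the identity.

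Finally, rearrange: $\Delta>0$ if and only if $\pi(f_M+r_M) > r_M$. When $f_M+r_M>0$, divide to get the stated threshold $\pi > r_M/(f_M+r_M)$. When $f_M=r_M=0$, $\Delta_{\mathrm{act}}=0$ identically, which gives the zero-effect clause; there the threshold expression is $0/0$ and is excluded by that clause.

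The main obstacle is not algebraic. It is stating the modelling assumptions precisely. The probabilities are over the task distribution together with any sampling randomness in $h$. "Does not change predictions outside its region" has to be read as equality in distribution, or pointwise, on $\{\tilde a_m=0\}$. The degenerate conditioning events also need the convention above. With these fixed, the argument is a direct decomposition.
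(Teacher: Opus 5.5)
Your proposal is correct and follows the paper's own argument: factor out the positive $\Pr[\tilde a_m(x)=1]$, split the active region into anchor failures and anchor-correct cases to get $\pi_M f_M - (1-\pi_M) r_M$, and rearrange. Your treatment of the $f_M+r_M=0$ case and of degenerate $\pi_M\in\{0,1\}$ conditioning spells out details the paper leaves implicit.
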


\noindent\textit{Proof.}
Only cases where $m$ is effectively active can change answer. Conditional on activation, the fraction $\pi_M(m;C_0)$ are anchor failures and can be fixed with probability $f_M(m;C_0)$. The remaining fraction $1-\pi_M(m;C_0)$ are anchor-correct examples and can be regressed with probability $r_M(m;C_0)$. Up to the positive factor $\Pr[\tilde a_m(x)=1]$, the expected accuracy change is
\begin{align}
&\pi_M(m;C_0) f_M(m;C_0) \nonumber\\
&\quad - (1-\pi_M(m;C_0)) r_M(m;C_0).
\end{align}
This quantity is positive exactly when the threshold above holds. \qed

This threshold explains why source gains do not imply transfer. A source-only gate can estimate whether $m$ is useful for $M_s$, but transfer depends on whether the same \atom{} activates precisely for a target $M_t$ under the target anchor.

\begin{proposition}[Source acceptance is not enough]
For any $f,r \in (0,1]$, there exist a source model $M_s$, a target model $M_t$, and a \rulename{} atom $m$ such that $m$ helps $M_s$ and hurts $M_t$, even when both models have the same conditional fix rate $f$ and performance degradation rate $r$.
\end{proposition}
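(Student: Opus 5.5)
The plan is to use the threshold in Proposition~1 and to vary only the activation precision between the two models, keeping $f$ and $r$ fixed. By Proposition~1, the marginal effect of $m$ on a model $M$ is proportional, with positive factor $\Pr[\tilde a_m(x)=1]$, to
\begin{equation*}
\Delta(\pi) = \pi f - (1-\pi) r = \pi(f+r) - r .
\end{equation*}
For fixed $f,r\in(0,1]$ this is strictly increasing and affine in $\pi$, and it vanishes at the threshold
\begin{equation*}
\pi^\ast = \frac{r}{f+r} .
\end{equation*}
Because $f,r>0$, we have $\pi^\ast\in(0,1)$. So it suffices to find a source model with $\pi_{M_s}(m;C_0) > \pi^\ast$ and a target model with $\pi_{M_t}(m;C_0) < \pi^\ast$, both realizing exactly the rates $f$ and $r$.

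I will build an explicit finite instance. Take a uniform distribution over a finite input pool $X$ and a single atom $m$ whose effective activation $\tilde a_m$ is $1$ on a subset $A\subseteq X$ and $0$ elsewhere. Choose rational precisions $\pi_s\in(\pi^\ast,1]$ and $\pi_t\in[0,\pi^\ast)$; for instance $\pi_s=1$ and $\pi_t=0$ work directly when the endpoints are allowed. Then define the anchor behaviours:
\begin{itemize}
\item $M_s$ is wrong under $C_0$ on a $\pi_s$ fraction of $A$;
\item $M_t$ is wrong under $C_0$ on a $\pi_t$ fraction of $A$;
\item both models are correct on $X\setminus A$, with unchanged predictions there, which meets the hypothesis that $m$ does not act outside its active region.
\end{itemize}
Under $C_0+m$, each model fixes an $f$ fraction of its anchor failures in $A$ and flips an $r$ fraction of its anchor-correct cases in $A$ to a wrong answer.

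The main obstacle is that $f$ and $r$ may be irrational, while fractions of a finite set are rational. To handle this, I will replace the finite pool by a probability space such as $[0,1]$ with Lebesgue measure, which lets $A$ be partitioned into measurable pieces of any prescribed proportions. Alternatively, the models can be taken as randomized, fixing each failure independently with probability $f$, so that the conditional rates equal $f$ and $r$ exactly.

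The degenerate choices $\pi_s=1$ and $\pi_t=0$ leave one conditioning event null, where $r_{M_s}$ or $f_{M_t}$ is undefined. To keep both conditional rates well-defined and equal to $f$ and $r$ for both models, I will instead take
\begin{equation*}
\pi_s = \frac{1+\pi^\ast}{2}, \qquad \pi_t = \frac{\pi^\ast}{2} .
\end{equation*}
Both lie strictly inside $(0,1)$, and the signs of $\Delta(\pi_s)>0$ and $\Delta(\pi_t)<0$ then give the claimed help on $M_s$ and hurt on $M_t$.
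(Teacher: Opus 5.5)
Your proposal is correct and takes the same approach as the paper: fix the same $f$ and $r$ for both models on a common effective active region, choose $\pi_s$ above and $\pi_t$ below $r/(f+r)$, and read off the two signs from Proposition~1. Your extra care fills in details the paper's sketch leaves implicit: you use a continuous measure (or randomized predictions) so that irrational rates are realizable, and you choose $\pi_s=(1+\pi^\ast)/2$ and $\pi_t=\pi^\ast/2$ so that both conditional rates are well-defined.
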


\noindent\textit{Proof.}
Choose $\pi_s > r/(f+r)$ and $\pi_t < r/(f+r)$. Construct active regions where the fraction of baseline failures is $\pi_s$ for the source and $\pi_t$ for the target, while both models share the same $f$ and $r$. Proposition 1 gives a positive effect for $M_s$ and a negative effect for $M_t$. \qed

\begin{proposition}[Anchor-specific validity]
A \rulename{} atom can have positive marginal gain under one anchor $C_0$ and negative marginal gain under another anchor $C_1$ for the same model $M$.
\end{proposition}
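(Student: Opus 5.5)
This is an existence claim, so the plan is to exhibit an explicit construction and then apply Proposition~1, in the same spirit as the proof of Proposition~2. Here the model $M$ and the atom $m=(c,a_m,b_m)$ are held fixed, and only the anchor changes. The point to exploit is that $F_M(C_0)$ and $K_M(C_0)$ are defined relative to the anchor, so the activation precision $\pi_M(m;C_0)$ can differ from $\pi_M(m;C_1)$ even though the effective active region $\{x:\tilde a_m(x)=1\}$ is identical under both anchors. This is because $\tilde a_m$ depends only on $x$ through keyword matching, not on the anchor.

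\textbf{Step 1: fix the common ingredients.} Fix an input distribution and an atom $m$ with $\Pr[\tilde a_m(x)=1]>0$. Choose rates $f,r\in(0,1]$ that we will arrange to hold under both anchors. For simplicity, take the same $f$ and $r$ for $(M,C_0)$ and $(M,C_1)$, and assume $m$ does not alter predictions outside its active region under either anchor, as Proposition~1 requires.

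\textbf{Step 2: choose the two anchors.} Specify $M$'s behaviour under $C_0$ so that the active region is dominated by anchor failures, with $\pi_M(m;C_0)>r/(f+r)$. A concrete choice is a weak anchor $C_0$ that lacks the knowledge encoded in $c$. Specify $C_1$ as a stronger anchor that already fixes most of those cases, so that $\pi_M(m;C_1)<r/(f+r)$. Informally, $C_1$ already contains the content of $c$, so adding $m$ can only regress cases that are already correct. Then define the answers $h^{C_i+m}_M$ on the active region so that the conditional fix rate among anchor failures is exactly $f$ and the degradation rate among anchor-correct cases is exactly $r$, for each $i$. Because $M$ is an arbitrary map from prompts to answers in this abstract setting, this assignment is always possible: partition each anchor's failure and correct sets within the active region in the required proportions.

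\textbf{Step 3: conclude via Proposition~1.} Proposition~1 gives a positive marginal effect for $(M,C_0)$ and a negative one for $(M,C_1)$. The same conclusion follows directly from the sign of $\pi f-(1-\pi)r$.

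\textbf{Main obstacle.} The only delicate point is consistency: the same model $M$ must realize both prompt-answer behaviours, and the atom's activation must be anchor-independent. I would handle this by treating $M$ as a function on full prompts $(C,x)$. Under that view, the four prompt families $C_0$, $C_0+m$, $C_1$, $C_1+m$ are distinct inputs, so their answers can be assigned independently. A tiny finite example, such as an active region of two points, can then be written out explicitly to make the construction concrete.
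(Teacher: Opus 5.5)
Your proposal is correct and takes essentially the same route as the paper's sketch: hold $M$ and the effective active region $\{x:\tilde a_m(x)=1\}$ fixed, choose the anchors so that $\pi_M(m;C_0)>r/(f+r)>\pi_M(m;C_1)$, and read off opposite signs from Proposition~1. Your extra remarks make explicit two points the paper's sketch leaves implicit: $\tilde a_m$ is anchor-independent because it depends only on $x$, and treating $M$ as a map on full prompts lets the four behaviours ($C_0$, $C_0+m$, $C_1$, $C_1+m$) be assigned independently.
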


\noindent\textit{Proof sketch.}
Choose anchors such that the atom's effective active region contains a high fraction of failures under $C_0$ and a low fraction of failures under $C_1$. The activation-precision threshold gives opposite signs. This formalizes why \atoms{} selected on a raw prompt must be revalidated before being deployed on top of a stronger \csicl{} anchor. \qed

\begin{proposition}[No-op safety]
We define no-op as the deliberate choice of the pipeline to not proceed with any modification of the anchor cheat sheet. If inactive examples receive the anchor cheat sheet only, then a routed \rulename{} \atom{} cannot change predictions outside its effective activation region. 
\end{proposition}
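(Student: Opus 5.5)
The plan is to prove the statement directly from the definition of routing, treating the model's prediction as a deterministic function of its full textual input. The first step is to formalize the routed prompt. For a routed \rulename{} \atom{} $m=(c,a_m,b_m)$ and anchor $C_0$, I will define the routed input as
\begin{equation}
\mathrm{prompt}_m(x)=\begin{cases} C_0+m \oplus x, & \tilde a_m(x)=1,\\ C_0\oplus x, & \tilde a_m(x)=0,\end{cases}
\end{equation}
where $\oplus$ denotes concatenation with the question. The hypothesis of the proposition, that inactive examples receive the anchor cheat sheet only, is exactly the second branch. The routed prediction is then $h^{\mathrm{routed}}_M(x)=M(\mathrm{prompt}_m(x))$, while the anchor prediction is $h^{C_0}_M(x)=M(C_0\oplus x)$.

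The key step is the following observation. For any $x$ with $\tilde a_m(x)=0$, the two prompts are identical strings, so $h^{\mathrm{routed}}_M(x)=h^{C_0}_M(x)$. Hence the set $\{x: h^{\mathrm{routed}}_M(x)\neq h^{C_0}_M(x)\}$ is contained in $\{x:\tilde a_m(x)=1\}$, which is the claim. I would add that $\tilde a_m$ depends only on $x$, since both predicates reduce to keyword matching on the question (Section~\ref{sec:routing}). This ensures the routing decision is well defined before the model is queried, and does not depend on the model's output.

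The main obstacle is stochastic decoding. If $M$ samples, then ``same prediction'' must be read distributionally. My first choice is to state the claim as equality of the conditional answer distributions given identical inputs, or to assume greedy decoding or a fixed seed, and then rerun the same argument. The no-op case follows as a corollary. When no \atom{} passes validation, $\tilde a\equiv 0$, every input receives $C_0$, and accuracy equals the anchor accuracy exactly. Finally, I will note that the proposition discharges the ``no change outside the active region'' assumption of Proposition 1, so Proposition 1's threshold applies to routed atoms without further conditions.
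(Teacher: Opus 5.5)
Your proposal is correct and follows the paper's own argument: inactive inputs are routed to the anchor alone, so their predictions coincide with $h^{C_0}_M(x)$, only the effective active region can change, and the no-op case follows when no atom fires. Your stochastic-decoding caveat is handled in the paper by construction, since inactive items copy the cached \csicl{} output exactly rather than re-querying the model, so the paper needs no determinism assumption.
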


\noindent\textit{Proof.}
By construction, the inference procedure uses $h_M^{C_0}(x)$ whenever no selected \atom{} has $\tilde a_m(x)=1$. Thus non-activated examples are unchanged. If no \atom{} activates, all examples follow the anchor. \qed

\paragraph{Failure overlap as a risk screen.}
Let $F_s$ and $F_t$ be source and target failure sets under $C_0$. 
We define the source-private failure region as the set of task cases where only the source model made a failed prediction and the shared failure region as the set of task cases where both source and target model give false answers. The shared failure region is $F_s \cap F_t$. The source-private failure region is $F_s \setminus F_t$. We summarize their overlap with
\begin{equation}
\rho_{s\to t}=\frac{|F_s\cap F_t|}{|F_s|}, \quad J(s,t)=\frac{|F_s\cap F_t|}{|F_s\cup F_t|}.
\end{equation}
Low overlap warns that source-derived rules are likely to activate imprecisely for the target. High overlap is not a guarantee, because wording, routing, fix rate, and performance degradation risk still matter.

\begin{lemma}[Overlap bound]
Let $\tilde A_m = \{x : \tilde a_m(x)=1\}$ be the effective active region of atom $m$. If $\tilde A_m \subseteq F_s$, then
\begin{align*}
\pi_{M_t}(m;C_0) &= \Pr[x \in F_t(C_0) \mid \tilde a_m(x)=1] \\
&\le \frac{|F_s \cap F_t|}{|\tilde A_m|}.
\end{align*}
If $\tilde A_m = F_s$, then $\pi_{M_t}(m;C_0)=\rho_{s\to t}$.
\end{lemma}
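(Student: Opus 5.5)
The plan is to interpret the conditional probability $\pi_{M_t}(m;C_0)=\Pr[x\in F_t(C_0)\mid \tilde a_m(x)=1]$ as an empirical proportion over the finite evaluation pool, with $x$ drawn uniformly. This is the reading the paper uses when it writes overlaps as cardinality ratios in $\rho_{s\to t}$ and $J(s,t)$. Under this reading, and using the standing assumption from Proposition 1 that $\Pr[\tilde a_m(x)=1]>0$ (so $|\tilde A_m|>0$), we have
\begin{equation*}
\pi_{M_t}(m;C_0)=\frac{|\tilde A_m\cap F_t|}{|\tilde A_m|}.
\end{equation*}
The first displayed equality in the lemma is then simply the definition of activation precision specialized to the target model $M_t$ under the shared anchor $C_0$.

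For the inequality, I will use the hypothesis $\tilde A_m\subseteq F_s$. Intersecting both sides with $F_t$ gives $\tilde A_m\cap F_t\subseteq F_s\cap F_t$, and monotonicity of cardinality yields $|\tilde A_m\cap F_t|\le |F_s\cap F_t|$. Dividing by the positive quantity $|\tilde A_m|$ gives the stated bound $\pi_{M_t}(m;C_0)\le |F_s\cap F_t|/|\tilde A_m|$.

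For the equality case $\tilde A_m=F_s$, I substitute directly: $|\tilde A_m\cap F_t|=|F_s\cap F_t|$ and $|\tilde A_m|=|F_s|$. The ratio then equals $|F_s\cap F_t|/|F_s|=\rho_{s\to t}$. Here $|F_s|>0$ follows from $|\tilde A_m|>0$.

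There is no real technical obstacle. The only point needing care is the first step, which fixes the probability model. If a general (non-uniform) distribution $\mathcal D$ were intended, the same argument goes through with $|\cdot|$ replaced by $\mathcal D$-measure. The bound would then read $\mathcal D(F_s\cap F_t)/\mathcal D(\tilde A_m)$, and the equality case would give the measure-weighted overlap. I would state this remark so the lemma is not tied to counting, and note that the bound is informative exactly when the source-private region $F_s\setminus F_t$ is large relative to $\tilde A_m$.
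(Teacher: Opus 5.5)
Your proof is correct and follows the paper's argument: inclusion $\tilde A_m\cap F_t\subseteq F_s\cap F_t$, divide by $|\tilde A_m|$, then substitute $\tilde A_m=F_s$. You also usefully make explicit the uniform-counting reading of $\Pr$ and the positivity of $|\tilde A_m|$, both of which the paper leaves implicit.

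One correction to your closing remark. The bound is nontrivial (below $1$) exactly when $|\tilde A_m|>|F_s\cap F_t|$. Equivalently, $|F_s\setminus F_t|>|F_s\setminus\tilde A_m|$. So the source-private region must be compared with the uncovered part $F_s\setminus\tilde A_m$, not with $\tilde A_m$: a smaller active region needs a larger source-private region for the bound to bite.
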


\noindent\textit{Proof.}
Since $\tilde A_m \subseteq F_s$, we have $\tilde A_m \cap F_t \subseteq F_s \cap F_t$. Dividing by $|\tilde A_m|$ gives the bound. If $\tilde A_m=F_s$, the expression becomes $|F_s \cap F_t|/|F_s|$. \qed

\section{\textit{Robust Failure, Conservative Repair}}
\label{sec:method}

\begin{figure*}[t]
\centering
\small
\includegraphics[width=0.85\textwidth]
{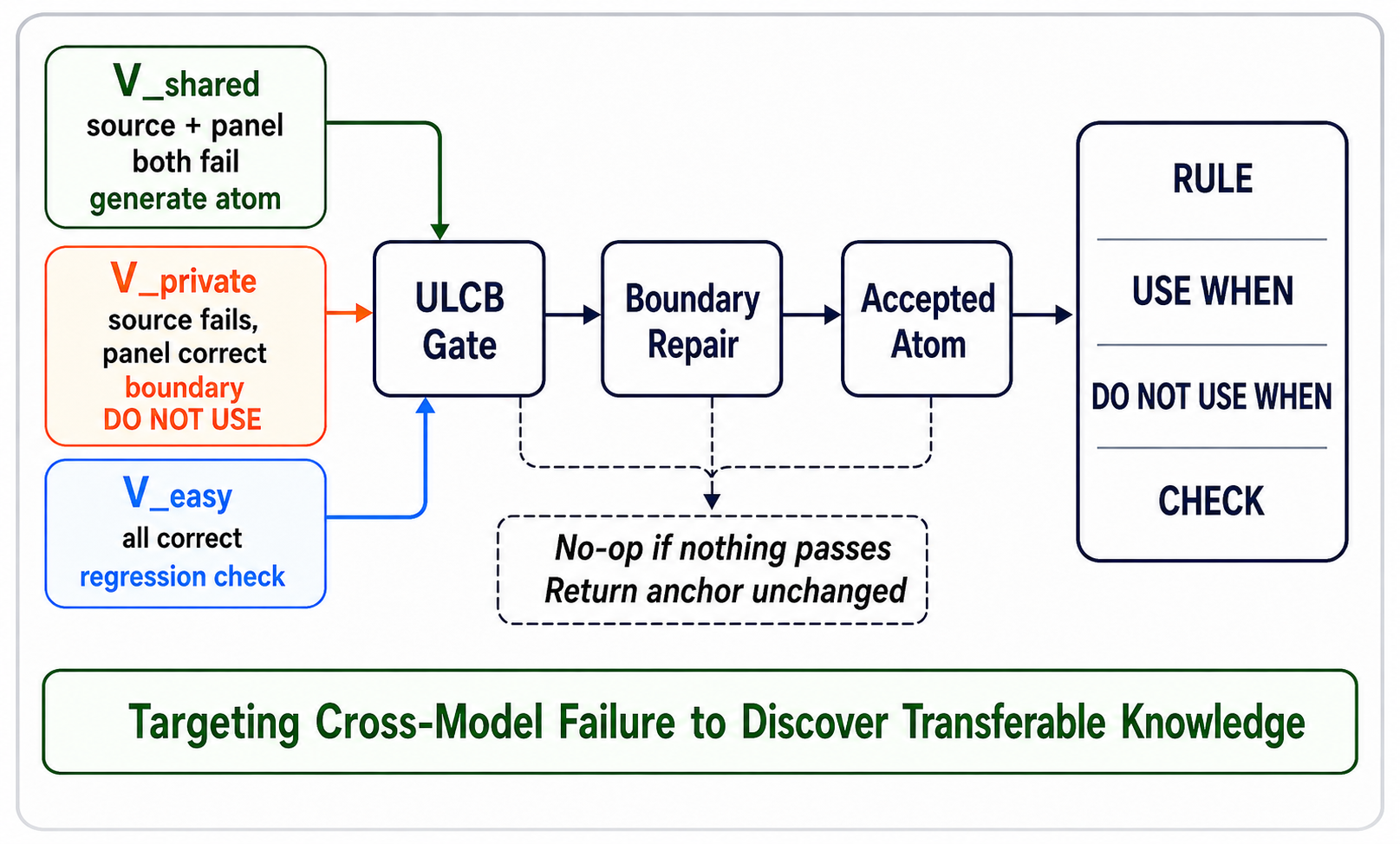}
\caption{\method{} treats a failure-derived atom as rule text plus an activation predicate and an activation boundary predicate. If the gate fails or no route fires, the system returns the anchor output.}
\label{fig:overview2}
\end{figure*}

\textit{Robust Failure, Conservative Repair} (\method) is a conservative textual knowledge distillation procedure. It treats failure-derived knowledge distillation as a routed repair procedure rather than as a global prompt update. We release our code and artifacts at \url{https://github.com/ChicagoHAI/RFCR}.

\subsection{Rule Atom Construction}
\label{sec:construction}
Let $P$ be a proxy set of models used during refinement, excluding the held-out target family when such a target is defined. Under $C_0$, the set of validation examples $V$ are divided into
\begin{align}
V_{\mathrm{shared}} &= F_s \cap \left(\bigcup_{M_j\in P} F_j\right), \\
V_{\mathrm{private}} &= F_s \cap \left(\bigcap_{M_j\in P} K_j\right), \\
V_{\mathrm{easy}} &= K_s \cap \left(\bigcap_{M_j\in P} K_j\right).
\end{align}
\method{} generates candidate \atoms{} from $V_{\mathrm{shared}}$. It uses source-private failure examples ($V_{\mathrm{private}}$) not as positive evidence but as boundary cases that tell the system when not to apply a rule. Easy examples ($V_{\mathrm{easy}}$) are used to check for any rule-\atom{}-induced performance degradation. The subset $V \backslash (V_{\mathrm{shared}} \cup V_{\mathrm{private}} \cup V_{\mathrm{easy}})$ accounts for the cases where the source model provides correct reasoning but at least one proxy model fails. We put this subset aside intentionally as the source model's fix rate for cases in this set remains $0$ regardless of the quality of candidates generated from this subset. This makes the conservative acceptance gate defined below too strict for any rule acceptance.

\paragraph{Rule-atom format.}
Each accepted \rulename{} atom is compact and human-readable:
\begin{quote}\small
\textbf{RULE:} one task-level decision principle.\\
\textbf{USE WHEN:} the minimal structural trigger.\\
\textbf{DO NOT USE WHEN:} source-private or confounding cases.\\
\textbf{CHECK:} one final verification step.
\end{quote}

\paragraph{Atom generation.}
For rule atom generation, we divide the training split into two disjoint sets: a rule-generation set (60\%) and a gate set (40\%). Failure regions are computed and candidate \atoms{} are generated on the rule-generation set only. The gate set is reserved for the acceptance test below, so no item is both used in rule generation and validation.  We score the source and proxy panel models on the generation set using the baseline cheat sheet and group the shared failures by failure patterns. For each cluster, the source model receives the clustered failure examples together with source-private examples as explicit negative boundary cases, and is asked to write a rule in the four-field format above. Several candidates are generated per cluster at different temperatures, and we run each candidate through the validation set and decide acceptance through the acceptance gate. The full generation prompt is in Appendix~\ref{app:impl}.

\paragraph{Conservative acceptance.}
Each candidate \atom{} $m$ is evaluated based on a conservative utility score:
\begin{align}
U(m) =&\ \min_{j\in P} \mathrm{LCB}_{95}[\Delta_j(V_s;m)] \\
&- \lambda \max_{j\in P} \mathrm{UCB}_{95}[R_j(V_p;m)] \\
&- \mu \max_{j\in P} \mathrm{UCB}_{95}[R_j(V_e;m)] - \nu |c|.
\end{align}
Here $V_s,V_p,V_e$ are the shared, private, and easy regions defined above. $\Delta_j(V_s;m)$ is the fix rate of $m$ on $V_{\mathrm{s}}$. $R_j(V_p;m)$ and $R_j(V_e;m)$ are degradation rates on $V_{\mathrm{p}}$ and $V_{\mathrm{e}}$. We use \textbf{Wilson score intervals} with $z{=}1.96$ for the confidence bounds and $\lambda{=}\mu{=}1.0$ and $\nu{=}0.05$ throughout the experiments (bound computation in Appendix~\ref{app:impl}). The first term requires shared-failure improvement. The next two penalize performance degradations on source-private failures and easy examples. The length penalty discourages broad memories. A rule is accepted only when $U(m)>0$. In the strict main-table deployment we also require positive net improvement and no observed performance degradations against \csicl{}-correct items.  

\paragraph{Activation boundary repair and data separation.}
Candidate \atoms{} can pass conservative validation while being incorrectly triggered on irrelevant cases. \method{} therefore includes a boundary-repair pass over activation predicates. This procedure fine-tunes the triggers using source-private failures with anchor-correct examples as negative controls. It is worth noting that the pass changes only whether an accepted \atom{} is allowed to fire, but not the content of the \atom{}. We first conduct a trial run on the items with the accepted \atoms{} active to observe which items are fixed or degraded and then modify the trigger accordingly before freezing them for the final rerun. Since the repair procedure utilizes the test items, the 400-item result should be read as a controlled development result rather than a held-out test. We therefore report the test result before and after repair as an ablation result. For held-out validation, we report the frozen knowledge atom set's performance on 295 official BIG-Bench items unseen during the rule atom generation and boundary repair phases.

\subsection{Rule Atom Routing at Inference}
\label{sec:routing}
At inference time, the model receives $C_0$ plus the \atoms{} whose activation condition fires and whose activation boundary condition does not fire on the task case. If no \atom{} activates, the input is exactly the unchanged \csicl{} anchor. Atom activation is determined by keyword matching between the activation predicates (USE WHEN / DO NOT USE WHEN section) of the rule atom and task case question text. We define this mode as routing in comparison to the global mode. Under global mode, all accepted atoms are presented to each task case with activation decision relies entirely on LLM's interpretation of the rule's activation predicates at inference time.

\paragraph{No-op deployment.}
If no \atom{} passes route and activation boundary checks, \method{} returns exactly the \csicl{} output. Abstention is part of our method: the system should not add a risky \rulename{} merely because a source model improved during development.

\begin{table*}[t]
\centering
\small
\begin{tabular}{lrrrrrrr}
\toprule
Task Set Evaluated & $n$ & Raw & \csicl{} & \method{} & $\Delta$ & 95\% CI & Reg. \\
\midrule
\bbh{} main eligible \atom{} tasks & 400 & 56.00 & 68.50 & 71.25 & +2.75 & [+1.25,+4.50] & 0 \\
\bbh{} held-out cases & 295  & -- & 66.10 & 67.46 & +1.36 & [+0.34,+2.71] & 0 \\
\bbh{} full18, supplementary & 1716 & 70.75 & 75.58 & 76.22 & +0.64 & [+0.29,+1.05] & 0 \\

\bottomrule
\end{tabular}
\caption{Strict unified-evaluator results after disambiguation activation boundary repair. $\Delta$ is absolute accuracy improvement in percentage points over the GPT-4.1-mini \csicl{} anchor. The full18 row is supplementary because most tasks are inactive passthrough.}
\label{tab:main}
\end{table*}

\begin{table*}[t]
\centering
\small
\resizebox{\textwidth}{!}{%
\begin{tabular}{lrrrrrrr}
\toprule
Model & $n$ & CS-ICL & \method{}+CS & Fix & Reg. & Net & $\Delta$ pp / 95\% CI \\
\midrule
\texttt{anthropic/claude-haiku-4.5} & 400 & 75.75 & 76.50 & 4 & 1 & +3 & +0.75 [-0.25, 2.00] \\
\texttt{google/gemini-2.5-flash-lite} & 400 & 64.25 & 63.75 & 3 & 5 & -2 & -0.50 [-2.00, 1.00] \\
\texttt{mistral-7b-instruct-v0.3} & 400 & 37.25 & 41.25 & 16 & 0 & +16 & +4.00 [2.25, 6.00] \\
\texttt{qwen2.5-7b-instruct} & 400 & 51.00 & 53.75 & 13 & 2 & +11 & +2.75 [1.00, 4.75] \\
\bottomrule
\end{tabular}}
\caption{Cross-model rerun with the repaired \atom{} set. Rule atoms are applied without target-model route tuning (i.e. visible during all task case evaluations with soft activation boundary check by LLM at inference time). The evidence supports positive-net transfer on several families, but not a cross-model zero-degradation guarantee.}
\label{tab:crossmodel}
\end{table*}

\section{Experiments and Results}
\label{sec:experiments}
\paragraph{Evaluation Task Sets.}
The main evaluation uses \bbh{} tasks for which the \method{} pipeline produced main-eligible routed \atoms{}: disambiguation QA, formal fallacies, geometric shapes, and object counting. This main task set contains 400 examples. We also report a supplementary 18-task \bbh{} task set as a sanity check on scale and a 295 official BIG-Bench upstream examples outside the seen train and test BBH subsets for validation. The 295 held-out set consists of 8 disambiguation QA, 100 formal fallacies, 87 geometric shapes, and 100 object-counting examples. The selection of these examples did not rely on any prior knowledge of the performance or expected outcomes of either our method or the baselines.

\paragraph{Anchor and protocol.}
The anchor is a GPT-4.1-mini-generated \csicl{} \cite{honda2025csicl} cheat sheet, scored by GPT-4.1-mini in the unified evaluator. For every inactive item, the \method{} condition copies the \csicl{} output exactly. The strict rerun uses the same task set, candidate rule atoms, prompt builder, parser, retry path, scoring guard, low-concurrency call schedule, and 10,000 paired-bootstrap samples across conditions.

\paragraph{Models and access.}
 We choose \texttt{gpt-4.1-mini} as our source model. This model is used in the generation of the \csicl{} baseline cheat sheets as well as all candidate \atoms{} generation. The proxy models we choose are \texttt{qwen2.5-7b-instruct} and \texttt{mistral-7b-instruct-v0.3} with approximately 7B parameters each. These models are hosted through vLLM on a compute cluster. Models used for cross-model transfer evaluation (\texttt{claude-haiku-4.5}, \texttt{gemini-2.5-flash-lite}) and models used for the early \diag{} diagnosis (GPT-4.1, Gemini-2.0-Flash, Llama-3.3-70B, Claude-3.7-Sonnet) are accessed through OpenRouter.

\paragraph{Computational budget.}
The \method{} pipeline using \csicl{} cheat sheet as baseline takes around 35{,}000 to 45{,}000 model calls. In our case since we are hosting two proxy models through vLLM, the pipeline uses around 25{,}000 paid API calls, which costs around 20 USD. The two proxy models took 48 GPU hours in total. We also report the per stage token usage. Each \csicl{} cheat sheet generation process consumes 149{,}683 input tokens and 9{,}122 output tokens. Each evaluation pass of the 400-item set takes 955{,}275 input tokens and 1{,}290 output tokens.

\paragraph{Main controlled result.}
Table~\ref{tab:main} gives the strict source-protocol result. On the 400-item \bbh{} task set, \method{} improves the \csicl{} baseline from 68.50\% to 71.25\%, a gain of +11 items or +2.75 pp, with a 95\% confidence interval of [+1.25,+4.50] pp and no paired performance degradations.

Regarding held-out confirmations, we conducted an additional evaluation using the same frozen knowledge atom set constructed for the main result (plus the CS-ICL baseline). We use the same evaluation pipeline as in the main result for consistency. Out of the 295 test cases, we observed 16 cases with atom activation, while the other 279 were CS-ICL passthrough. On accuracy results, we see that the CS-ICL baseline scored $66.10\%$, and our overlay result gave $67.46\%$. This $+1.36$ pp improvement results from four fixes and zero observed regressions.

\paragraph{Full 18-task breakdown.}
We also report the result of \method{} on the 18-task BBH set (Table~\ref{tab:eligibility} in Appendix~\ref{app:audits}). The supplementary 18-task check improves from 75.58\% to 76.22\%, also without performance degradations (Table~\ref{tab:main}). We observe that for 14 out of the 18 tasks \method{} refrains from adding any atomic rules upon the \csicl{} baseline. The three tasks that \method{} does generate valid atomic rules are formal fallacies, disambiguation QA, and geometric shapes, which constitute the task sets we use for our main results. The object counting task is also included in the main task set. However, its candidate \atom{} was rejected by the acceptance gate, so it runs as passthrough of the \csicl{} baseline. It is worth noting that the 18-task run is a sanity check for inactive passthrough rather than a benchmark-wide improvement claim.

\begin{table}[t]
\centering
\scriptsize
\resizebox{\columnwidth}{!}{%
\begin{tabular}{lrrcrr}
\toprule
Method & Acc. & $\Delta$ & 95\% CI & Fix & Reg. \\
\midrule
\csicl{} anchor (seed) & 68.50 & -- & -- & -- & -- \\
ProTeGi & 70.50 & +2.00 & [$-$1.50,+5.50] & 32 & 24 \\
GEPA & 65.50 & $-$3.00 & [$-$6.25,+0.25] & 16 & 28 \\
\method{} & \textbf{71.25} & \textbf{+2.75} & [+1.25,+4.50] & 11 & \textbf{0} \\
\bottomrule
\end{tabular}%
}
\caption{Prompt-optimization baselines on the same 400-item main task set.  ProTeGi and GEPA start from the exact \csicl{} frozen cheat sheet as we used for \method{}.}
\label{tab:baselines}
\end{table}

\paragraph{Prompt-optimization baselines.}

To evaluate \method{} as a prompt optimization method, we ran two prompt optimization methods, ProTeGi \cite{pryzant2023automatic} and GEPA \cite{agrawal2025gepa}. Both baselines use the same training and testing sets as in our \method{} main results. Both methods use gpt-4.1-mini as the source model and initiate with the same \csicl{} cheat sheet as \method. We use the default parameters of both methods. The results show that among the 400 test examples, ProTeGi achieved 32 fixes with 24 regressions. GEPA got 16 fixes but 28 regressions. Overall, we see that these optimization methods yielded more fixes than our RFCR pipeline at the cost of causing many regressions along the way. In GEPA’s case, the revision-induced regression outnumbered the fixes it introduced, making it perform worse than the CS-ICL baseline. This result echoes our concern about the uncertainty of prompt optimization’s effects on already correct cases and emphasizes the value of the strict knowledge atom selection pipeline proposed by RFCR. More details of the baseline method execution are in Appendix~\ref{app:baselines}.

\paragraph{Cross-model diagnostics.}
Table~\ref{tab:crossmodel} evaluates the repaired \atom{} set on model families. The repaired \atoms{} have positive net transfer on Claude Haiku, Mistral, and Qwen. We further observe that models with relatively weaker reasoning capability, such as Qwen and Mistral, show larger net gains with relatively low performance degradation. This supports our interpretation that \method{} distills task-specific knowledge that transfers broadly rather than merely patching source-model errors. However, Gemini Flash Lite has negative net transfer and several models show nonzero performance degradations, indicating that even task-specific rules may not universally apply across all model families.

\begin{table}[t]
\centering
\scriptsize
\resizebox{\columnwidth}{!}{%
\begin{tabular}{lrrrrrr}
\toprule
Task & \csicl{} & \method{}+CS & Act. & Fix & Reg. & Net \\
\midrule
Disambig. QA & 83 & 87 & 15 & 4 & 0 & +4 \\
Formal fallacies & 67 & 67 & 4 & 0 & 0 & 0 \\
Geometric shapes & 56 & 63 & 20 & 7 & 0 & +7 \\
Object counting & 68 & 68 & 0 & 0 & 0 & 0 \\
\midrule
Total & 274 & 285 & 39 & 11 & 0 & +11 \\
\bottomrule
\end{tabular}%
}
\caption{Task-level decomposition on the 400-item main task set. Counts are out of 100 items per task. ``Act.'' is the number of activated routed-\rulename{} applications.}
\label{tab:task}
\end{table}

\begin{table*}[t]
\centering
\small
\begin{tabular}{lrrrrrrr}
\toprule
Version & $n$ & Activated & Fix & Reg. & Net & Acc. & $\Delta$ vs CS \\
\midrule
Original locked package & 400 & 59 & 9 & 0 & +9 & 70.75 & +2.25 \\
Repaired route & 400 & 39 & 11 & 0 & +11 & 71.25 & +2.75 \\
\bottomrule
\end{tabular}
\vspace{0.35em}

\begin{tabular}{lrrrr}
\toprule
Disambiguation route version & Activated & Fix & Reg. & Net \\
\midrule
Original broad route & 35 & 1 & 0 & +1 \\
Repaired narrow route: \texttt{needed\_one OR told\_that} & 15 & 4 & 0 & +4 \\
\bottomrule
\end{tabular}
\caption{CS-ICL-anchored same-cache route ablation. The repaired activation trigger here is Disambiguation-QA-specific and presented as an example of boundary repair rather than a generalizable mechanism.}
\label{tab:repair}
\end{table*}

\begin{table*}[t]
\centering
\small
\begin{tabular}{lrrrrrrr}
\toprule
Condition & $n$ & Acc. & Correct & Fix & Deg. & Net & Activated \\
\midrule
No \atom{} & 400 & 56.00 & 224 & 0 & 0 & 0 & 0 \\
Global \atom{} & 400 & 66.00 & 264 & 78 & 38 & +40 & 400 \\
Raw-Routed \atom{} & 400 & 66.00 & 264 & 43 & 3 & +40 & 132 \\
Boundary-routed \atom{} & 400 & 69.50 & 278 & 54 & 0 & +54 & 132 \\
\bottomrule
\end{tabular}
\caption{Rule Atom Activation Control Ablation with no Baseline Cheat Sheet}
\label{tab:source_only}
\end{table*}

\paragraph{Task-level decomposition.}
Table~\ref{tab:task} shows that the gain is sparse. Geometric shapes contributes +7 items and disambiguation QA contributes +4. Formal fallacies is neutral; object counting is a no-op under the repaired \csicl{} anchor. This sparse pattern is consistent with our methodology: \method{} improves only where a routed repair is available and otherwise leaves the anchor unchanged.

\paragraph{Boundary-repair ablation.}
Because the main result uses boundary repaired routing for Disambiguation QA items, we investigate exactly how this boundary repair effort helps in test accuracy improvement. Table~\ref{tab:repair} compares the original locked \atom{} set with the boundary repaired set with the same \csicl{} baseline and test set. The repaired route improves the test accuracy to 71.25\% without performance degradations. The boundary-repaired routing reduces the number of activations on test cases from 35 to 15 but increases the number of fixes from 1 to 4. The result shows that narrowing the activation space improves model performance by directing each \atom{} only to cases where its encoded knowledge is relevant. This supports the activation-precision view: sharpening a rule's activation boundary can increase fixes without increasing performance degradations.

\paragraph{Global versus routed exposure.}
We also examine the effect of different activation control measures on the same kinds of \atoms{} when baseline cheat sheets are not presented. Table~\ref{tab:source_only} shows that global exposure leads to high activation frequency of the \atoms{} but also leads to a substantial performance degradation. The raw-routing mode (LLM receives the full atom list at inference time, but task results of cases not activating any atoms fall back to the no atom result) reduces the degradation rate drastically. Boundary-routed mode (what standard \method{} uses) further improves the accuracy result by boosting activated task case accuracy and eliminating degradation.

\begin{table}[t]
\centering
\scriptsize
\setlength{\tabcolsep}{3pt}
\begin{tabular}{lrrrrr}
\toprule
Variant & Acc. & Fix & Reg. & Net & Act. \\
\midrule
CS anchor & 68.50 & 0 & 0 & 0 & 0 \\
Gen. only & 70.75 & 14 & 5 & +9 & 79 \\
Gen.+ULCB & 70.75 & 9 & 0 & +9 & 59 \\
+Boundary & 71.25 & 11 & 0 & +11 & 39 \\
\bottomrule
\end{tabular}
\caption{Generated-only versus activation-boundary-repaired \atoms{}.}
\label{tab:generated_boundary}
\end{table}

\paragraph{Generated rules versus activation boundary repair.}
Table~\ref{tab:generated_boundary} separates candidate generation from conservative validation. Generation-only approach fixes more cases but introduces five paired degradations. The upper/lower confidence bounds (ULCB) filtering show no performance degradations with the same net accuracy gain. The repaired activation boundary approach further improves on both net gain and activation precision. This matches our claim that the main result comes from sparse conservative routing instead of indiscriminately adding more generated text.

\section{Analysis}
\paragraph{Sparse gains.}
The main gain of \method{} is sparse and routed across subsets of cases with similar demands for specific knowledge, rather than a global marginal improvement over the entire task. Only 39 item-rule applications activate on the 400-item task set, yet they account for +11 net fixes without any source-model performance degradations. This aligns with our Proposition 1: a valuable \atom{} needs high activation precision on the subset where it fires, not coverage of every item.

\paragraph{Activation precision.}
Table~\ref{tab:activation_precision} reports \atom-level activation precision and net gain. Fix precision here refers to the number of fixed cases among all activated cases, while activation precision measures the fraction of CS-ICL failures among activated cases. The geometric \atom{} has both high activation precision and the largest net gain. The formal-fallacies \atom{} is safe but so underactive that it does not sway the test result either way. We see here that fix precision and activation precision are both positively correlated with the net gain, while the total number of activated cases does not.
\begin{table}[t]
\centering
\scriptsize
\resizebox{\columnwidth}{!}{%
\begin{tabular}{lrrrrr}
\toprule
Atom & Act. & Fix. prec. & Act. prec. & Fix & Net \\
\midrule
Disambig-v1.2 & 15 & 26.7 & 40.0 & 4 & +4 \\
FF-neg. & 4 & 0.0 & 50.0 & 0 & 0 \\
Geo-1 & 8 & 12.5 & 12.5 & 1 & +1 \\
Geo-6 & 12 & 50.0 & 91.7 & 6 & +6 \\
\bottomrule
\end{tabular}}
\caption{Activation-precision diagnostics.}
\label{tab:activation_precision}
\end{table}

\paragraph{Abstention is part of the method.}
Object counting has eligible task coverage but zero active applications in the repaired main table. This is a no-op decision made by the pipeline rather than a failure to deploy. The supplementary full-18 result also shows that inactive passthrough does not introduce any performance degradation.

\paragraph{Cross-model performance degradations are informative.}
Cross-model performance degradations show the weakness of source-derived activation boundary. The same \atom{} can have positive net transfer and still regress target-correct items if the target activation boundary differs from the source activation boundary. This motivates leave-one-family-out proxy gating and per-target activation boundary diagnostics.

\section{Related Work}
\paragraph{Textual knowledge distillation and cheat sheets.}
\csicl{} distills many-shot in-context examples into a compact textual summary, reducing inference-time prompt cost while preserving much of the benefit of long demonstrations \citep{honda2025csicl}. Dynamic Cheatsheet studies persistent, evolving memory for black-box language models at test time \citep{suzgun2026dynamic}. Our work is complementary: rather than asking only whether a distilled \rulename{} is useful in aggregate, we ask when failure-derived textual knowledge transfers across model families without becoming a model-specific patch.

\paragraph{Prompt optimization and reflective search.}
Prompt optimization methods use data, feedback, search, evolution, or textual gradients to improve model inputs and behavior \citep{pryzant2023automatic,fernando2023promptbreeder,yang2024optimizers,khattab2024dspy,yuksekgonul2025textgrad,agrawal2025gepa}. These methods usually optimize a prompt for a measured development distribution. \method{} focuses on a narrower distillation risk: a failure-derived \atom{} can be accepted by the source model yet hurt another model if it activates on examples that the target already solves.

\paragraph{Reasoning benchmarks.}
We evaluate on BIG-Bench Hard (\bbh), a challenging subset of BIG-Bench designed to stress reasoning behavior \citep{srivastava2022bigbench,suzgun2022bbh}. Because several \bbh{} tasks are sensitive to prompt wording and reasoning format, they are useful for testing whether distilled textual repairs are robust rather than source-specific. We discuss AGIEval \citep{zhong2024agieval} as a diagnostic benchmark, but do not use it as a main claim until a locked nonnegative-anchor protocol is available.

\section{Conclusion}
Rules distilled from one model’s failures can improve source model's accuracy but hurt other models' performance when applied to unrelated or already-correct cases. We use activation precision to measure how often a rule is applied to the cases it was designed to address. \method{} turns this analysis into a procedure by constructing \rulename{} \atom{}s from shared failures, refining activation boundaries with source-private failures and anchor-correct examples, and applying conservative acceptance checks to candidate rule atoms. On a 400-item \bbh{} task set, \method{} improves the \csicl{} baseline accuracy by 2.75 percentage points without degrading any previously correct case. The success of \method{} suggests that improving cheat sheets through textual knowledge distillation requires both extracting new knowledge and controlling when that knowledge is applied. Future work should explore forms of cheat sheet modification beyond simply appending rules to separate the effects of cheat sheet structure from those of newly distilled knowledge.

\section*{Ethical Considerations}
This work studies textual distillation on reasoning benchmarks. The main risk is overclaiming reliability: a \rulename{} artifact that improves one model may harm another. We therefore report paired performance degradations, preserve a no-op fallback, and emphasize that source-model gains alone are not evidence of general improvement. The experiments do not require personal data.

\section*{Limitations}
The main improvement is modest and concentrated in geometric shapes and disambiguation QA. The evidence supports conservative routed textual distillation as a sparse no-degradation improvement over a strong same-generator \csicl{} anchor instead of a broad benchmark optimization claim.

The zero-degradation property holds only for the locked source-model protocol. Cross-model performance degradations remain nonzero for some targets, reflecting that activation predicates were derived from source-model failures and not validated against every target-model activation boundary. The paper includes a retrospective leave-one-family-out proxy-gating diagnostic, but it does not yet regenerate \atoms{} with each family fully excluded. A stronger test should perform end-to-end leave-one-family-out generation, gating, freezing, and target-only evaluation.

The main evaluation task set contains four \bbh{} tasks with eligible routed \atoms{}. Broader benchmark-scale claims require more tasks, learned routers, and stronger held-out validation. The supplementary full-18 result is useful as a sanity check, but it should not be interpreted as evidence that every task benefits from \method{}, because most items use inactive passthrough.

The main 400-item result is a controlled development result, not a fresh held-out test after activation boundary repair. The final disambiguation activation boundary was repaired using item-effect evidence from the \atom{}-enabled pilot before the repaired route was frozen and rerun under a strict evaluator.

\section*{Acknowledgments}
We thank the anonymous reviewers for their insightful comments. We also thank the NeuriCo project from Chicago Human+AI Lab for its assistance in early-stage mechanistic exploration. This work was supported in part by compute credits from Modal, provided through CMSC 25750/35750 Large Language Models at the University of Chicago, as well as the University of Chicago AI initiatives.

\bibliography{custom}

\appendix
\section{Reproducibility Statement}
 We release our code, the frozen knowledge atom set, and the \csicl{} cheat sheets used in all experiments at \url{https://github.com/ChicagoHAI/RFCR}. The final run uses a fixed task set, fixed \csicl{} cache, fixed selected-\atom{} file, the same parser and prompt builder across conditions, a scoring guard, low-concurrency calls, inactive passthrough for non-activated items, and a paired bootstrap with 10,000 samples. The reported \csicl{} column is the unified-evaluator GPT-4.1-mini \csicl{} anchor/cache condition and should not be mixed with separately reported deployment numbers from earlier runs. The main 400-item result and all ablation tables are from a single locked run. The exploratory holistic rewrite table (Appendix~\ref{app:holistic}) reports means over three seeds (1000, 2000, 3000). Cross-model 95\% CIs are computed per model using the same paired-bootstrap procedure.

\section{RFCR Implementation Details}
\label{app:impl}

\paragraph{Shared-failure identification.}
A failure is classified as \emph{shared} if at least one proxy model $M_j \in P$ also fails the item under $C_0$. This is a union criterion: $V_{\mathrm{shared}} = F_s \cap (\bigcup_j F_j)$. A failure is \emph{source-private} if \emph{every} panel model gets it right: $V_{\mathrm{private}} = F_s \cap (\bigcap_j K_j)$. Items that all models including the source get right form $V_{\mathrm{easy}}$. Cluster size within $V_{\mathrm{shared}}$ serves as a soft priority signal for which failure patterns to write \atoms{} for first.

\paragraph{Atom generation prompt.}
Atoms are generated by presenting the source model with a curated batch of shared-failure examples together with a selection of source-private boundary examples, then issuing the following template:

\begin{quote}\small
\texttt{You are given examples where a language model failed despite having a general cheat sheet. Identify the common structural pattern and write a repair rule.}\\[2pt]
\texttt{Failed examples (model got wrong):}\\
\texttt{[SHARED\_FAILURE\_EXAMPLES]}\\[2pt]
\texttt{Boundary examples (model gets right — the rule must NOT fire here):}\\
\texttt{[SOURCE\_PRIVATE\_EXAMPLES]}\\[2pt]
\texttt{Write the rule in this exact format:}\\
\texttt{RULE: <one task-level decision principle>}\\
\texttt{USE WHEN: <minimal structural trigger>}\\
\texttt{DO NOT USE WHEN: <conditions from the boundary examples>}\\
\texttt{CHECK: <one final verification step>}\\[2pt]
\texttt{Be as specific as possible. Do not write a rule that applies to all task examples.}
\end{quote}

The \textbf{DO NOT USE WHEN} field is seeded from $V_{\mathrm{private}}$ so the model is nudged toward writing a narrow activation boundary condition rather than a broad topic match. Multiple candidates are drawn per cluster using temperature sampling; all candidates are then scored by the ULCB gate.

\paragraph{Upper/Lower confidence bounds.}
Confidence bounds are Wilson score intervals at $z{=}1.96$. $\mathrm{LCB}_{95}[\Delta_j]$ is the Wilson lower bound of the observed fix proportion on the gate subset, and $\mathrm{UCB}_{95}[R_j]$ is the Wilson upper bound of the observed performance degradation proportion. An \atom{} passes the gate only when $U(m) > 0$. The strict deployment condition additionally requires zero \emph{observed} performance degradations against \csicl{}-correct items across all panel models (not just a UCB bound).

\paragraph{Activation Boundary repair prompt.}
After an \atom{} passes the ULCB gate, its activation predicate is inspected for over-triggering. The repair prompt presents the source model with over-triggered cases and asks for a narrower condition:

\begin{quote}\small
\texttt{An accepted rule fires on the following examples but should not (anchor-correct or source-private cases):}\\
\texttt{[OVER\_TRIGGERED\_EXAMPLES]}\\[2pt]
\texttt{The rule correctly fires on these examples:}\\
\texttt{[CORRECTLY\_TRIGGERED\_EXAMPLES]}\\[2pt]
\texttt{Current USE WHEN: [CURRENT\_TRIGGER]}\\[2pt]
\texttt{Write a narrower USE WHEN condition that still covers the correct examples and excludes the over-triggered ones. Express it as a minimal keyword or structural pattern (e.g., "contains X OR contains Y").}\\
\texttt{Narrowed USE WHEN:}
\end{quote}

Each candidate narrowing is evaluated on the locked task set. The narrowing that maximizes net gain while maintaining zero observed performance degradations is selected.

\paragraph{Scoring guard and retry logic.}
Each model call goes through a scoring guard that checks whether the returned answer can be parsed to a valid option or numeric value. Failed parses trigger up to two retries with a slightly relaxed extraction prompt before the item is marked as a parse failure. Items with parse failures are excluded from paired comparisons but logged. The low-concurrency call schedule (sequential rather than batch-parallel) was adopted to reduce inter-call variance across conditions; all conditions for a given item are scored in the same session where possible.

\paragraph{Paired bootstrap.}
The 95\% CI and performance degradation counts are computed over 10,000 paired bootstrap resamples of the item-level effect vector $e_i = \mathbf{1}[\text{\method{} correct}] - \mathbf{1}[\text{\csicl{} correct}]$. A performance degradation is defined as $e_i = -1$; a fix as $e_i = +1$; a no-op as $e_i = 0$. The reported CI is the percentile interval of the bootstrap distribution of $\sum_i e_i / n$.

\section{Baseline Method Execution Details}
\label{app:baselines}

\paragraph{Shared protocol.}
Both baselines follow the same seeded, leakage-controlled protocol. The seed prompt is the locked gpt-4.1-mini \csicl{} cheat sheet for each task, i.e., the same anchor \method{} refines. Optimization touches only the 150-item train split per task; every optimization-time prompt and response is logged and scanned by an automated audit for verbatim test-item text, with zero matches on all four tasks for both methods. The single best prompt per task is frozen (SHA-256 recorded) before any test call, then evaluated once on the task's 100 test items through the locked evaluator path (temperature 0.0, seed 42, \texttt{max\_tokens} 64, gpt-4.1-mini, unchanged prompt wrapper and answer parser). Per-item fixes and regressions are computed against the same frozen \csicl{} anchor cache as the main result, and pooled deltas use a task-stratified paired bootstrap with 10,000 resamples. The verdict parse rate is 100\% (400/400) for both methods.

\paragraph{ProTeGi.}
We run the LMOps \texttt{prompt\_optimization} implementation of ProTeGi (textual-gradient beam search) with its documented default configuration: 4 optimization rounds, beam size 4, minibatch size 32, 4 gradients per round with 4 errors per gradient, up to 8-fold prompt expansion, and the UCB bandit evaluator (8 evaluation rounds, 8 prompts per round, 32 samples per evaluation, $c{=}1.0$). \texttt{gpt-4.1-mini} serves as both the task model and the gradient model; optimization decoding uses temperature 0.0 with a fixed run seed. The candidate that ProTeGi mutates is the cheat-sheet text itself, and every optimization-time scoring call wraps the candidate in the locked evaluator template, so optimization observes the exact deployment presentation. Because the stock implementation selects its final prompt by test-set score, we modified final selection to use full-train accuracy over the final beam; no test items are read anywhere in the optimization loop. Optimization used 28{,}816 model calls across the four tasks. On disambiguation QA the beam search retained the seed cheat sheet unchanged (hash-identical) after all rounds.

\paragraph{GEPA.}
We run the standalone \texttt{gepa} library (v0.1.1) through its public \texttt{optimize()} API with its default adapter, supplying the seed cheat sheet as the system-prompt candidate. The 150 train items are shuffled once with a fixed seed and split 75/75 into the library's internal train and validation sets; the metric is exact-match correctness through our own task parsers. Because the library requires an explicit budget, we set \texttt{max\_metric\_calls} to 680 per task, the value produced by the DSPy GEPA ``light'' preset budget formula; all other settings are library defaults, including default sampling on optimization-time calls. \texttt{gpt-4.1-mini} serves as both the task model and the reflection model. GEPA returned the seed prompt unchanged on three of four tasks (every reflective mutation scored at or below the seed on its own validation split), so its test-time deltas on those tasks measure fresh-call variance against the cached anchor rather than a prompt change; its one adopted rewrite (object counting) yielded 11 fixes and 12 regressions.

\section{\diag{} Diagnostic Pipeline}
\label{app:rfr}

The \diag{} pipeline is the source-only refinement baseline we used to produce the diagnostic results in Table~\ref{tab:diag}. It runs in three phases on a source model over a training split, with each phase accepting patches only when they pass source-model quality gates with no multi-model evaluation at any stage. This is the fundamental reason its output becomes model-specific. Below we describe each phase in detail.

\paragraph{Phase 0: Bootstrap cheat sheet.}
The pipeline samples the first 75 training examples (configurable). For examples lacking an explicit chain-of-thought, it first generates zero-shot reasoning traces via the source model. The examples are then formatted as Question / Reasoning / Answer triplets and passed to the following prompt:

\begin{quote}\small
\texttt{Create a cheat sheet based on the examples below. You will be asked to answer questions similar to these examples during the test, without being allowed to refer to the examples at that time. Your task here is to make a cheat sheet that will help you answer such problems correctly. First, carefully read the examples below and identify which ones you find most difficult to answer.}\\[2pt]
\texttt{[EXAMPLES]}\\[2pt]
\texttt{Now, create a cheat sheet to help you solve the difficult examples. Exclude any content that is easy for you, and only include specific, detailed points to address the challenging ones.}
\end{quote}

The output is a \texttt{prior\_knowledge} text block that forms the starting anchor for Phase 1.

\paragraph{Phase 1: Iterative prior-knowledge patching.}
Phase 1 runs an iterative repair loop over source-model failures under the current \texttt{prior\_knowledge}. Each iteration (up to a configurable maximum, default 4) samples up to 15 failures and constructs a \emph{failure block} for each: the input, the expected and predicted answers, the source model's wrong reasoning trace, and — when oracle traces are available — the correct reasoning from a stronger reference model. The patch prompt is:

\begin{quote}\small
\texttt{You are refining a knowledge guide that helps a model answer questions correctly.}\\[2pt]
\texttt{=== CURRENT KNOWLEDGE GUIDE ===}\\
\texttt{[PRIOR\_KNOWLEDGE]}\\
\texttt{=== END KNOWLEDGE GUIDE ===}\\[2pt]
\texttt{The model is making the following errors:}\\
\texttt{[FAILURE\_BLOCKS]}\\[2pt]
\texttt{Produce an IMPROVED version of the knowledge guide that helps avoid these mistakes. You may:}\\
\texttt{- ADD a new rule, clarification, or concrete example}\\
\texttt{- MODIFY an existing rule to be more precise or correct}\\
\texttt{- REMOVE a rule that is actively causing errors}\\[2pt]
\texttt{Requirements: preserve rules that are working correctly; focus on ABSTRACT REASONING PRINCIPLES, not memorization of specific examples; output ONLY the improved knowledge guide text.}
\end{quote}

A candidate patch is committed only when it passes two source-model gates: (a) it fixes at least 20\% of the sampled failures (\emph{fix-rate gate}), and (b) it does not regress more than 20\% of a sample of currently-correct items (\emph{performance degradation gate}). The loop exits when source accuracy reaches 85\% or when two consecutive iterations produce no accepted patch. The oracle contrast — showing the correct reasoning from a stronger model alongside the wrong reasoning — provides a concrete target for the patcher, but is itself derived from source-model failures and carries no cross-model validity signal.

\paragraph{Phase 2: Case study generation.}
Phase 2 operates on residual failures after Phase 1. It partitions failures by structural feature signature (e.g., argument form, entity type, geometric relation) and, for each partition with at least 3 failures, generates up to 3 candidate case studies using a task-specific structured prompt. Each generated case study has the following fields:

\begin{quote}\small
\texttt{=== CASE STUDY: [title] ===}\\
\texttt{FAILURE\_TYPE: A (missing knowledge) | B (wrong pattern)}\\
\texttt{ACTIVATE IF:}\\
\texttt{  - [structural condition 1]}\\
\texttt{  - [structural condition 2]}\\
\texttt{DO NOT ACTIVATE IF: [boundary condition]}\\
\texttt{COMMON WRONG MOVE: [what the model typically does wrong]}\\
\texttt{NEXT CHECK: [mechanical verification step]}\\
\texttt{WHY THIS WORKS: [1--2 sentences explaining the fix]}\\
\texttt{SUPPORT:}\\
\texttt{  • E1 = ... | E2 = ... | Answer: ... — [note]}
\end{quote}

The ACTIVATE-IF conditions become the activation predicate; DO NOT ACTIVATE IF becomes the boundary predicate. A candidate case study is accepted when it (a) fixes at least 30\% of the partition failures, (b) does not regress more than 15\% of correct items, and (c) is not a duplicate or near-duplicate of an existing case study through an LLM judge. Accepted case studies are appended to the cheat sheet and re-scored; partitions are refreshed after each flush, and the loop runs for up to 5 iterations.

\paragraph{Why model-specificity accumulates.}
All three gates in Phases 1 and 2 are evaluated against the \emph{source model only}. A Phase 1 patch is accepted if it fixes source failures and does not break source-correct items. A Phase 2 case study is accepted if its partition fix rate and performance degradation rate are satisfactory on the source. Neither the fix-rate threshold nor the degradation threshold is measured against any other model other than the source model at any point during development. As a result, the accepted patches and case studies are precisely those that explain the source model's residual error profile, which may not align with the shared error profile across models. Through this mechanism, the \diag{} artifact accumulates model-specific content as it progresses through Phase 1 and Phase 2.

\section{Supplementary Audits}
\label{app:audits}
\paragraph{Protocol.}
The repaired rerun used the same item universe, \csicl{} cache, parser, prompt builder, retry path, scoring guard, and inactive passthrough rule as the locked protocol. It activated 39 \method{}+\csicl{} item-rule applications and had zero missing effect rows, baseline mismatches, parse errors, or failed calls (Table~\ref{tab:audit}).

\begin{table}[h]
\centering
\small
\begin{tabular}{lr}
\toprule
Audit field & Value \\
\midrule
Main eligible items & 400 \\
Full18 items & 1716 \\
Activated applications & 39 \\
Missing effect rows & 0 \\
Baseline mismatches & 0 \\
Parse errors & 0 \\
Failed calls & 0 \\
Bootstrap samples & 10,000 \\
\bottomrule
\end{tabular}
\caption{Protocol audit for the repaired rerun.}
\label{tab:audit}
\end{table}

\begin{table*}[t]
\centering
\footnotesize
\begin{tabular}{llrrr@{\qquad}llrrr}
\toprule
Task & Status & $n$ & \csicl{} & \method{} & Task & Status & $n$ & \csicl{} & \method{} \\
\midrule
boolean & no \atom{} & 100 & 89.00 & 89.00 & causal judgement & no \atom{} & 87 & 66.67 & 66.67 \\
date understanding & no \atom{} & 100 & 72.00 & 72.00 & disambig. QA & main & 100 & 83.00 & 87.00 \\
formal fallacies & main & 100 & 67.00 & 67.00 & geometric shapes & main & 100 & 56.00 & 63.00 \\
LD five & no \atom{} & 100 & 73.00 & 73.00 & LD seven & no \atom{} & 100 & 73.00 & 73.00 \\
LD three & no \atom{} & 100 & 96.00 & 96.00 & navigate & no \atom{} & 100 & 78.00 & 78.00 \\
object counting & main & 100 & 68.00 & 68.00 & penguins table & no \atom{} & 58 & 74.14 & 74.14 \\
colored objects & no \atom{} & 100 & 81.00 & 81.00 & snarks & no \atom{} & 71 & 85.92 & 85.92 \\
sports & no \atom{} & 100 & 96.00 & 96.00 & temporal seq. & no \atom{} & 100 & 100.00 & 100.00 \\
tracking shuffled & no \atom{} & 100 & 38.00 & 38.00 & web of lies & excluded & 100 & 65.00 & 65.00 \\
\midrule
\multicolumn{5}{l}{Passthrough subtotal (14 tasks): $n{=}1316$, 77.74 / 77.74} & \multicolumn{5}{l}{Full 18 total: $n{=}1716$, 75.58 / 76.22} \\
\bottomrule
\end{tabular}
\caption{Full 18 task set: eligibility status and per-task accuracy (\csicl{} / \method{}). Tasks without an eligible \atom{} are exact passthrough, so the two accuracies are identical by construction. Web of lies is excluded because its candidate \atoms{} used activation boundary derived from test items, and no \atom{} is deployed for it.}
\label{tab:eligibility}
\end{table*}

\paragraph{AGIEval anchor diagnostic and rebuild check.}
AGIEval is excluded from the main claim because the available \csicl{} anchor underperforms raw prompting on all three tasks (Table~\ref{tab:agieval}, top). A \csicl{} anchor that is worse than raw prompting is not a valid starting point for a distillation improvement claim: any \method{} gain measured against it is at high risk of being confounded by the anchor deficit rather than attributable to the \atoms{}. We therefore ran a development-only anchor rebuild check (E8) to determine whether a nonnegative anchor could be recovered for any task (Table~\ref{tab:agieval}, bottom). Two tasks (LogiQA and LSAT-LR) produced nonnegative development anchors, but LSAT-AR found no variant that satisfied the nonnegative-anchor requirement. As a result, AGIEval remains diagnostic and is not promoted to the main benchmark task set.

\begin{table}[h]
\centering
\small
\begin{tabular}{lrrrr}
\toprule
Task & $n$ & Raw & \csicl{} & Net \\
\midrule
LogiQA & 300 & 57.67 & 57.36 & $-$1 \\
LSAT-AR & 23 & 40.00 & 30.43 & $-$11 \\
LSAT-LR & 373 & 81.57 & 79.22 & $-$6 \\
Aggregate & 696 & 63.51 & 60.92 & $-$18 \\
\bottomrule
\end{tabular}
\vspace{0.5em}

\begin{tabular}{lrrrrl}
\toprule
Task & Raw dev & Anchor dev & Fix & Reg. \\
\midrule
LogiQA & 50.00 & 70.00 & 4 & 0 \\
LSAT-AR & -- & -- & -- & -- \\
LSAT-LR & 85.00 & 90.00 & 1 & 0 \\
\bottomrule
\end{tabular}
\caption{Top: AGIEval rejected-anchor diagnostic. Bottom: development-only nonnegative-anchor rebuild check (E8).}
\label{tab:agieval}
\end{table}

\paragraph{Leave-one-family-out proxy diagnostic.}
Table~\ref{tab:loo} evaluates whether the repaired \atom{} set remains useful when a target model family is treated as held out during proxy gating. This is a retrospective diagnostic: the \atom{} set is fixed from the source protocol and is not regenerated with each family excluded. It should therefore not be read as a full end-to-end leave-one-family-out experiment, but rather as a check on whether the accepted \atoms{} are source-specific to the extent of actively hurting other families even under a favorable gate. The results show positive net transfer for Anthropic, Mistral, and Qwen, while Gemini incurs a net performance degradation. The performance degradation for Gemini is consistent with the cross-model result in Table~\ref{tab:crossmodel} and reflects that the \atom{} activation boundaries were derived from source-model failures without Gemini-specific validation.

\begin{table}[h]
\centering
\small
\resizebox{\columnwidth}{!}{%
\begin{tabular}{lrrrrr}
\toprule
Held-out family & \csicl{} & \method{}+CS & Fix & Reg. & Net \\
\midrule
Anthropic & 75.75 & 76.25 & 3 & 1 & $+$2 \\
Google/Gemini & 64.25 & 63.75 & 3 & 5 & $-$2 \\
Mistral & 37.25 & 40.75 & 14 & 0 & $+$14 \\
Qwen & 51.00 & 53.75 & 13 & 2 & $+$11 \\
\bottomrule
\end{tabular}}
\caption{Retrospective leave-one-family-out proxy-gating diagnostic. The \atom{} set is frozen from the source protocol; this is not a full end-to-end LOO experiment. Positive net transfer holds for three of four families, with Gemini as the exception.}
\label{tab:loo}
\end{table}

\paragraph{Per-item mechanism examples.}
Table~\ref{tab:item_examples} gives examples of fixes, no-op passthrough, and boundary blocking. These examples are illustrative and are not used as additional evaluation evidence.
\begin{table}[h]
\centering
\footnotesize
\resizebox{\columnwidth}{!}{%
\begin{tabular}{llllllll}
\toprule
Task & ID & CS & \method{} & Gold & Atom & Blocked & Outcome \\
\midrule
geo & 0155 & B & G & G & geo-6 & no & fix \\
geo & 0203 & A & K & K & geo-1 & no & fix \\
disambig & 0171 & C & B & B & disambig & no & fix \\
disambig & 0242 & C & A & A & disambig & no & fix \\
obj. count & 0150 & 9 & 9 & 9 & none & n/a & no-op \\
disambig & 0154 & A & A & A & disambig & yes & boundary no-op \\
\bottomrule
\end{tabular}}
\caption{Qualitative per-item mechanism examples. The no-op rows illustrate inactive passthrough and boundary blocking rather than global prompt expansion.}
\label{tab:item_examples}
\end{table}

\begin{table*}[ht]
\centering
\small
\begin{tabular}{lrrr|rrr}
\toprule
 & \multicolumn{3}{c|}{Causal judgment} & \multicolumn{3}{c}{Object counting} \\
\cmidrule(lr){2-4}\cmidrule(lr){5-7}
Target model & App. & Hol. & $\Delta$ & App. & Hol. & $\Delta$ \\
\midrule
\texttt{gpt-4.1}        & \textbf{62.8} & 61.3          & $-$1.5           & \textbf{83.7} & 68.0          & $-$15.7 \\
\texttt{llama-3.3-70b}  & \textbf{66.3} & 64.4          & $-$1.9           & 60.3          & \textbf{62.0} & $+$1.7  \\
\texttt{gemini-2.5-fl}  & 61.3          & \textbf{65.9} & $+$4.6           & \textbf{99.0} & 97.0          & $-$2.0  \\
\texttt{claude-3.5-hk}  & 64.0          & \textbf{65.1} & $+$1.1           & 89.7          & \textbf{92.0} & $+$2.3  \\
\texttt{mistral-7b}     & \textbf{61.3} & 54.8          & $-$6.5           & 35.3          & \textbf{47.0} & $+$11.7 \\
\texttt{qwen2.5-7b}     & 57.9          & 57.9          & $\phantom{+}$0.0 & 36.3          & \textbf{60.3} & $+$24.0 \\
\bottomrule
\end{tabular}
\caption{Exploratory cross-model transfer accuracy (\%, mean over seeds 1000/2000/3000) for append vs.\ holistic rewrite on causal judgment and object counting. Bold marks the better mode per cell; $\Delta$ = holistic $-$ append (pp).}
\label{tab:rewrite}
\end{table*}

\section{Exploratory Holistic Rewrite Results}
\label{app:holistic}
Iterative holistic rewrite is an exploratory alternative to \atom{} appending. Rather than appending a routed \atom{} alongside the \csicl{} anchor, holistic rewrite integrates the repair directly into the cheat sheet text, restructuring or merging rules where needed. It may help when a repair requires structural integration rather than a localized addition, but it is not yet robust enough to be a main contribution. Table~\ref{tab:rewrite} reports held-out accuracy for append versus holistic rewrite on causal judgment (CJ) and object counting (OC) across six target models (mean over seeds 1000/2000/3000).

On OC, holistic rewrite yields large gains for the two small open-weight models (\texttt{mistral-7b}: $+11.7$ pp, \texttt{qwen2.5-7b}: $+24.0$ pp) and modest gains for \texttt{claude-3.5-haiku} ($+2.3$ pp) and \texttt{llama-3.3-70b} ($+1.7$ pp), but substantially hurts \texttt{gpt-4.1} ($-15.7$ pp). On CJ, the picture is more mixed: holistic rewrite helps Gemini and Claude but hurts Mistral and is neutral on Qwen. The pattern suggests that holistic rewrite benefits models whose reasoning is less sensitive to prompt structure, particularly smaller open-weight models that may rely more on the integrated rule text. On the other hand, stronger proprietary models are harmed when the cheat sheet structure is disrupted. These results are promising for structured-integration use cases but should be treated as exploratory until replicated with tighter controls.

\end{document}